\newtheorem*{rep@theorem}{\rep@title}
\newcommand{\newreptheorem}[2]{%
\newenvironment{rep#1}[1]{%
 \def\rep@title{#2 \ref{##1}}%
 \begin{rep@theorem}}%
 {\end{rep@theorem}}}
\newtheorem{theorem}{Theorem}[section]
\newtheorem{definition}{Definition}[section]
\newtheorem{lemma}{Lemma}[section]
\newcommand{\algname}{adversarial inverse reinforcement learning}
\newcommand{\algnameabbrev}{AIRL}
\newcommand{\partheta}{\frac{\partial}{\partial\theta}}
\newcommand{\lse}[1]{\underset{#1}{\textrm{softmax}}}
\newcommand{\const}{\textrm{const}}
\title{Learning Robust Rewards with Adversarial Inverse Reinforcement Learning}
\author{Justin Fu, Katie Luo, Sergey Levine  \\
Department of Electrical Engineering and Computer Science\\
University of California, Berkeley\\
Berkeley, CA 94720, USA \\
\texttt{justinjfu@eecs.berkeley.edu,katieluo@berkeley.edu,} \\
\texttt{svlevine@eecs.berkeley.edu} \\
}
\begin{document}

\maketitle

\begin{abstract}
Reinforcement learning provides a powerful and general framework for decision making and control, but its application in practice is often hindered by the need for extensive feature and reward engineering. Deep reinforcement learning methods can remove the need for explicit engineering of policy or value features, but still require a manually specified reward function. Inverse reinforcement learning holds the promise of automatic reward acquisition, but has proven exceptionally difficult to apply to large, high-dimensional problems with unknown dynamics. In this work, we propose \algnameabbrev, a practical and scalable inverse reinforcement learning algorithm based on an adversarial reward learning formulation. We demonstrate that \algnameabbrev\ is able to recover reward functions that are robust to changes in dynamics, enabling us to learn policies even under significant variation in the environment seen during training. Our experiments show that \algnameabbrev\ greatly outperforms prior methods in these transfer settings.
\end{abstract}

\section{Introduction}
While reinforcement learning (RL) provides a powerful framework for automating decision making and control, significant engineering of elements such as features and reward functions has typically been required for good practical performance. In recent years, deep reinforcement learning has alleviated the need for feature engineering for policies and value functions, and has shown promising results on a range of complex tasks, from vision-based robotic control~\citep{Levine16} to video games such as Atari~\citep{Mnih2015} and Minecraft~\citep{Oh16}. However, reward engineering remains a significant barrier to applying reinforcement learning in practice. In some domains, this may be difficult to specify (for example, encouraging ``socially acceptable'' behavior), and in others, a na\"{i}vely specified reward function can produce unintended behavior~\citep{Amodei16}. Moreover, deep RL algorithms are often sensitive to factors such as reward sparsity and magnitude, making well performing reward functions particularly difficult to engineer. 

Inverse reinforcement learning (IRL)~\citep{Russell98,Ng2000} refers to the problem of inferring an expert's reward function from demonstrations, which is a potential method for solving the problem of reward engineering. However, inverse reinforcement learning methods have generally been less efficient than direct methods for learning from demonstration such as imitation learning~\citep{Ho16b}, and methods using powerful function approximators such as neural networks have required tricks such as domain-specific regularization and operate inefficiently over whole trajectories~\citep{Finn16a}. There are many scenarios where IRL may be preferred over direct imitation learning, such as re-optimizing a reward in novel environments~\citep{Finn17} or to infer an agent's intentions, but IRL methods have not been shown to scale to the same complexity of tasks as direct imitation learning. However, adversarial IRL methods ~\citep{Finn16a,Finn16b} hold promise for tackling difficult tasks due to the ability to adapt training samples to improve learning efficiency.

Part of the challenge is that IRL is an ill-defined problem, since there are many optimal policies that can explain a set of demonstrations, and many rewards that can explain an optimal policy~\citep{Ng1999}. The maximum entropy (MaxEnt) IRL framework introduced by ~\citet{Ziebart08} handles the former ambiguity, but the latter ambiguity means that IRL algorithms have difficulty distinguishing the true reward functions from those shaped by the environment dynamics. While shaped rewards can increase learning speed in the original training environment, when the reward is deployed at test-time on environments with varying dynamics, it may no longer produce optimal behavior, as we discuss in Sec.~\ref{sec:disentangle}. To address this issue, we discuss how to modify IRL algorithms to learn rewards that are invariant to changing dynamics, which we refer to as \textit{disentangled rewards}.

In this paper, we propose \algname\ (\algnameabbrev), an inverse reinforcement learning algorithm based on adversarial learning. Our algorithm provides for simultaneous learning of the reward function and value function, which enables us to both make use of the efficient adversarial formulation and recover a generalizable and portable reward function, in contrast to prior works that either do not recover a reward functions~\citep{Ho16b}, or operates at the level of entire trajectories, making it difficult to apply to more complex problem settings~\citep{Finn16a,Finn16b}. 
Our experimental evaluation demonstrates that \algnameabbrev\ outperforms prior IRL methods~\citep{Finn16a} on continuous, high-dimensional tasks with unknown dynamics by a wide margin. When compared to GAIL~\citep{Ho16b}, which does not attempt to directly recover rewards, our method achieves comparable results on tasks that do not require transfer. However, on tasks where there is considerable variability in the environment from the demonstration setting, GAIL and other IRL methods fail to generalize. In these settings, our approach, which can effectively disentangle the goals of the expert from the dynamics of the environment, achieves superior results.

\section{Related Work}
Inverse reinforcement learning (IRL) is a form of imitation learning and learning from demonstration~\citep{Argall09}. Imitation learning methods seek to learn policies from expert demonstrations, and IRL methods accomplish this by first inferring the expert's reward function. Previous IRL approaches have included maximum margin approaches~\citep{Abbeel04, Ratliff06}, and probabilistic approaches such as~\citet{Ziebart08,Boularias11}. In this work, we work under the maximum causal IRL framework of~\citet{Ziebart10}. Some advantages of this framework are that it removes ambiguity between demonstrations and the expert policy, and allows us to cast the reward learning problem as a maximum likelihood problem, connecting IRL to generative model training.

Our proposed method most closely resembles the algorithms proposed by~\citet{Uchibe17,Ho16b,Finn16b}. Generative adversarial imitation learning (GAIL)~\citep{Ho16b} differs from our work in that it is not an IRL algorithm that seeks to recover reward functions. The critic or discriminator of GAIL is unsuitable as a reward since, at optimality, it outputs 0.5 uniformly across all states and actions. Instead, GAIL aims only to recover the expert's policy, which is a less portable representation for transfer. ~\citet{Uchibe17} does not interleave policy optimization with reward learning within an adversarial framework. Improving a policy within an adversarial framework corresponds to training an amortized sampler for an energy-based model, and prior work has shown this is crucial for performance~\citep{Finn16a}. ~\citet{Wulfmeier15} also consider learning cost functions with neural networks, but only evaluate on simple domains where analytically solving the problem with value iteration is tractable. Previous methods which aim to learn nonlinear cost functions have used boosting~\citep{Ratliff07} and Gaussian processes~\citep{Levine11}, but still suffer from the feature engineering problem.

Our IRL algorithm builds on the adversarial IRL framework proposed by~\citet{Finn16b}, with the discriminator corresponding to an odds ratio between the policy and exponentiated reward distribution. The discussion in~\citet{Finn16b} is theoretical, and to our knowledge no prior work has reported a practical implementation of this method. Our experiments show that direct implementation of the proposed algorithm is ineffective, due to high variance from operating over entire trajectories. While it is straightforward to extend the algorithm to single state-action pairs, as we discuss in Section~\ref{sec:gan_irl}, a simple unrestricted form of the discriminator is susceptible to the reward ambiguity described in~\citep{Ng1999}, making learning the portable reward functions difficult. As illustrated in our experiments, this greatly limits the generalization capability of the method: the learned reward functions are not robust to environment changes, and it is difficult to use the algorithm for the purpose of inferring the intentions of agents. We discuss how to overcome this issue in Section~\ref{sec:disentangle}.

\citet{Amin17} consider learning reward functions which generalize to new tasks given multiple training tasks. Our work instead focuses on how to achieve generalization within the standard IRL formulation.

\section{Background}
\label{sec:background}

Our inverse reinforcement learning method builds on the maximum causal entropy IRL framework~\citep{Ziebart10}, which considers an entropy-regularized Markov decision process (MDP), defined by the tuple $(\mathcal{S}, \mathcal{A}, \mathcal{T}, r, \gamma, \rho_0)$. $\mathcal{S}, \mathcal{A}$ are the state and action spaces, respectively, $\gamma \in (0, 1)$ is the discount factor. The dynamics or transition distribution $\mathcal{T}(s'|a,s)$, the initial state distribution $\rho_0(s)$, and the reward function $r(s,a)$ are unknown in the standard reinforcement learning setup and can only be queried through interaction with the MDP. 

The goal of (forward) reinforcement learning is to find the optimal policy $\pi^*$ that maximizes the expected entropy-regularized discounted reward, under $\pi$, $\mathcal{T}$, and $\rho_0$:
\[
\pi^* = 
\textrm{arg max}_{\pi} {
	\,E_{\tau \sim \pi}\left[  \sum_{t=0}^T \gamma^t (r(s_t, a_t) + H(\pi(\cdot|s_t))) \right]
    }\, ,
\]
where $\tau = (s_0, a_0, ... s_T, a_T)$ denotes a sequence of states and actions induced by the policy and dynamics. It can be shown that the trajectory distribution induced by the optimal policy $\pi^*(a|s)$ takes the form $\pi^*(a|s) \propto \exp\{Q^*_\textrm{soft}(s_t, a_t)\}$ \citep{Ziebart10, Haarnoja2017}, where $Q^*_\textrm{soft}(s_t, a_t) = r_t(s,a) + E_{(s_{t+1}, ...) \sim \pi}[\sum_{t'=t}^T \gamma^{t'}(r(s_{t'}, a_{t'}) + H(\pi(\cdot|s_{t'}))]$ denotes the soft Q-function.

Inverse reinforcement learning instead seeks infer the reward function $r(s,a)$ given a set of demonstrations $\mathcal{D} = \{\tau_1, ..., \tau_N \}$. In IRL, we assume the demonstrations are drawn from an optimal policy $\pi^*(a|s)$. We can interpret the IRL problem as solving the maximum likelihood problem:
\begin{equation}
\label{eqn:maxlike_irl}
\max_\theta E_{\tau \sim \mathcal{D}}\left[\log p_\theta(\tau)\right]\ ,
\end{equation}
Where $p_\theta(\tau) \propto p(s_0) \prod_{t=0}^T p(s_{t+1}|s_t,a_t) e^{\gamma^t  r_\theta(s_t, a_t)}$ parametrizes the reward function $r_\theta(s,a)$ but fixes the dynamics and initial state distribution to that of the MDP. Note that under deterministic dynamics, this simplifies to an energy-based model where for feasible trajectories, $p_\theta(\tau) \propto e^{\sum_{t=0}^T \gamma^t r_\theta(s_t, a_t)}$~\citep{Ziebart08}.

\citet{Finn16b} propose to cast optimization of Eqn.~\ref{eqn:maxlike_irl} as a GAN~\citep{gan-goodfellow} optimization problem. They operate in a trajectory-centric formulation, where the discriminator takes on a particular form ($f_\theta(\tau)$ is a learned function; $\pi(\tau)$ is precomputed and its value ``filled in''):
\begin{equation}
\label{eqn:gan_gcl_traj}
D_\theta(\tau) = \frac{\exp\{f_\theta(\tau)\}}{\exp\{f_\theta(\tau)\} + \pi(\tau)} ,
\end{equation}
and the policy $\pi$ is trained to maximize $R(\tau) = \log(1-D(\tau))-\log D(\tau)$. Updating the discriminator can be viewed as updating the reward function, and updating the policy can be viewed as improving the sampling distribution used to estimate the partition function. If trained to optimality, it can be shown that an optimal reward function can be extracted from the optimal discriminator as $f^*(\tau) = R^*(\tau) + \text{const}$, and $\pi$ recovers the optimal policy. We refer to this formulation as generative adversarial network guided cost learning (GAN-GCL) to discriminate it from guided cost learning (GCL)~\citep{Finn16b}. This formulation shares similarities with GAIL~\citep{Ho16b}, but GAIL does not place special structure on the discriminator, so the reward cannot be recovered.

\section{Adversarial Inverse Reinforcement Learning (AIRL)}
\label{sec:gan_irl}
In practice, using full trajectories as proposed by GAN-GCL can result in high variance estimates as compared to using single state, action pairs, and our experimental results show that this results in very poor learning. We could instead propose a straightforward conversion of Eqn.~\ref{eqn:gan_gcl_traj} into the single state and action case, where:
\[
D_\theta(s,a) = \frac{\exp\{f_\theta(s,a)\}}{\exp\{f_\theta(s,a)\} + \pi(a|s)} .
\]
As in the trajectory-centric case, we can show that, at optimality, $f^*(s,a) = \log \pi^*(a|s) = A^*(s,a)$, the advantage function of the optimal policy. We justify this, as well as a proof that this algorithm solves the IRL problem in Appendix~\ref{app:gan_irl_proof} .

This change results in an efficient algorithm for imitation learning. However, it is less desirable for the purpose of reward learning. While the advantage is a valid optimal reward function, it is a heavily \textit{entangled} reward, as it supervises each action based on the action of the optimal policy for the training MDP. Based on the analysis in the following Sec.~\ref{sec:disentangle}, we cannot guarantee that this reward will be robust to changes in environment dynamics. In our experiments we demonstrate several cases where this reward simply encourages mimicking the expert policy $\pi^*$, and fails to produce desirable behavior even when changes to the environment are made.

\section{The Reward Ambiguity Problem}
\label{sec:disentangle}

We now discuss why IRL methods can fail to learn robust reward functions. First, we review the concept of reward shaping. \citet{Ng1999} describe a class of reward transformations that preserve the optimal policy. Their main theoretical result is that under the following reward transformation,
\begin{equation}
\label{eqn:equivalence}
\hat{r}(s,a,s') = r(s,a,s') + \gamma \Phi(s') - \Phi(s) \ ,
\end{equation}
the optimal policy remains unchanged, for any function $\Phi: \mathcal{S} \to \mathbb{R}$. Moreover, without prior knowledge of the dynamics, this is the \textit{only} class of reward transformations that exhibits policy invariance. Because IRL methods only infer rewards from demonstrations given from an optimal agent, they cannot in general disambiguate between reward functions within this class of transformations, unless the class of learnable reward functions is restricted.

We argue that shaped reward functions may not be robust to changes in dynamics. We formalize this notion by studying policy invariance in two MDPs $M, M'$ which share the same reward and differ only in the dynamics, denoted as $T$ and $T'$, respectively. 

Suppose an IRL algorithm recovers a shaped, policy invariant reward $\hat{r}(s,a,s')$ under MDP $M$ where $\Phi \neq 0$. Then, there exists MDP pairs $M, M'$ where changing the transition model from $T$ to $T'$ breaks policy invariance on MDP $M'$. As a simple example, consider deterministic dynamics $T(s,a) \to s'$ and state-action rewards $\hat{r}(s,a) = r(s,a) + \gamma \Phi(T(s,a)) - \Phi(s)$. It is easy to see that changing the dynamics $T$ to $T'$ such that $T'(s,a) \neq T(s,a)$ means that $\hat{r}(s,a)$ no longer lies in the equivalence class of Eqn.~\ref{eqn:equivalence} for $M'$.

\subsection{Disentangling Rewards from Dynamics}
First, let the notation $Q^*_{r, T}(s,a)$ denote the optimal Q-function with respect to a reward function $r$ and dynamics $T$, and $\pi^*_{r, T}(a|s)$ denote the same for policies. We first define our notion of a "disentangled" reward.

\begin{definition}[Disentangled Rewards]
A reward function $r'(s,a,s')$ is (perfectly) disentangled with respect to a ground-truth reward $r(s,a,s')$ and a set of dynamics $\mathcal{T}$ such that under all dynamics $T \in \mathcal{T}$, the optimal policy is the same:
$\pi^*_{r', T}(a|s) = \pi^*_{r, T}(a|s)$
\end{definition}
We could also expand this definition to include a notion of suboptimality. However, we leave this direction to future work.

Under maximum causal entropy RL, the following condition is equivalent to two optimal policies being equal, since Q-functions and policies are equivalent representations (up to arbitrary functions of state $f(s)$):
\[ Q^*_{r', T}(s,a) = Q^*_{r, T}(s,a) - f(s)\]

To remove unwanted reward shaping with arbitrary reward function classes, the learned reward function can only depend on the current state $s$. We require that the dynamics satisfy a decomposability condition where functions over current states $f(s)$ and next states $g(s')$ can be isolated from their sum $f(s) + g(s')$. This can be satisfied for example by adding self transitions at each state to an ergodic MDP, or any of the environments used in our experiments. The exact definition of the condition, as well as proof of the following statements are included in Appendix~\ref{app:state_only}. 

\begin{theorem}
\label{thm:suff}
Let $r(s)$ be a ground-truth reward, and $T$ be a dynamics model satisfying the decomposability condition. Suppose IRL recovers a state-only reward $r'(s)$ such that it produces an optimal policy in $T$:
\[Q^*_{r', T}(s,a) = Q^*_{r, T}(s,a) - f(s)\]
Then, $r'(s)$ is disentangled with respect to all dynamics.
\end{theorem}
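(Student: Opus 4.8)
The plan is to prove something a little stronger than the definition demands: that the hypothesis, together with the decomposability condition, forces $r'$ to equal $r$ up to an additive constant, and then to invoke the fact that shifting a reward by a constant never changes the (soft) optimal policy under \emph{any} dynamics. Since $\pi^*_{r,T}=\pi^*_{r',T}$ is exactly what the assumed identity $Q^*_{r',T}(s,a)=Q^*_{r,T}(s,a)-f(s)$ encodes, showing $r'=r+\text{const}$ and then $\pi^*_{r',T'}=\pi^*_{r,T'}$ for every $T'$ gives disentanglement immediately.

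First I would translate the hypothesis from $Q$-functions to rewards using the two defining relations of maximum causal entropy RL: the soft Bellman backup $Q^*_{r,T}(s,a)=r(s)+\gamma\,E_{s'\sim T(\cdot|s,a)}[V^*_{r,T}(s')]$ and the soft value relation $V^*_{r,T}(s)=\log\int\exp\{Q^*_{r,T}(s,a)\}\,da$. Applying the latter to $Q^*_{r',T}(s,a)=Q^*_{r,T}(s,a)-f(s)$ pulls the $a$-independent term $f(s)$ out of the log-sum-exp, giving $V^*_{r',T}(s)=V^*_{r,T}(s)-f(s)$. Substituting both soft Bellman backups (for $r$ and for $r'$, under $T$) and subtracting, the $V^*$ terms combine and I obtain, for all $s,a$,
\[
r'(s)-r(s) \;=\; \gamma\,E_{s'\sim T(\cdot|s,a)}[f(s')]\;-\;f(s)\,.
\]
The key observation is that the left-hand side is independent of $a$, hence so is the right-hand side.

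The crux is upgrading this $a$-invariance to the statement that $f$ is constant — this is precisely what the decomposability condition supplies. Concretely, when $T$ has a self-transition at every state, plugging the self-transition action into the displayed equation yields $r'(s)-r(s)=-(1-\gamma)f(s)$; comparing with the equation for an arbitrary action $a$ forces $E_{s'\sim T(\cdot|s,a)}[f(s')]=f(s)$ for all $s,a$, i.e.\ $f$ is harmonic with respect to $T$, so by the connectivity (ergodicity) built into the decomposability condition the maximum principle makes $f$ constant. More abstractly, the condition asserts that a sum of a function of the current state and a function of the next state can be split uniquely up to a constant over transitions; applying this to $\gamma f(s')-f(s)=r'(s)-r(s)$ (a function of $s$ only, with zero $s'$-component) again pins $f$ down to a constant. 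Either way, $r'(s)=r(s)+\text{const}$.

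Finally, for any dynamics $T'$, adding a constant $k$ to the reward shifts $Q^*$ and $V^*$ by the same additive constant (namely $k\sum_t\gamma^t$), so the soft advantage $A^*_{r',T'}(s,a)=Q^*_{r',T'}(s,a)-V^*_{r',T'}(s)$ equals $A^*_{r,T'}(s,a)$; since the optimal policy is $\pi^*(a|s)\propto\exp\{A^*(s,a)\}$, this gives $\pi^*_{r',T'}=\pi^*_{r,T'}$ for every $T'$, which is the claim. The main obstacle is the third step — converting $a$-invariance of $\gamma E_{s'}[f(s')]-f(s)$ into constancy of $f$ — and in particular pinning down the decomposability condition precisely enough that it works in the stochastic case and not merely for deterministic or self-looping dynamics; everything else is bookkeeping with the soft Bellman equations, where the only care needed is the log-sum-exp/integral form of the soft value function and the discounting convention in the constant-shift step.
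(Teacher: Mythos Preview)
Your proposal is correct and follows essentially the same route as the paper: derive $r'(s)-r(s)=\gamma\,E_{s'\sim T(\cdot|s,a)}[f(s')]-f(s)$ from the soft Bellman relations and the hypothesis $Q^*_{r',T}=Q^*_{r,T}-f$, then use the decomposability condition to force $f$ to be constant, hence $r'=r+\text{const}$. The paper's write-up reaches the identical displayed identity (phrasing it as $\phi(s)=\gamma E_{s'}[f(s')]-f(s)$ with $\phi=r'-r$) and invokes decomposability in the same way; your intermediate step $V^*_{r',T}=V^*_{r,T}-f$ and your explicit final paragraph on constant shifts are slightly more spelled out than the paper's version, and your caution about the stochastic case is well placed---the paper's argument, like yours, is cleanest when transitions are deterministic so that Lemma~\ref{lem:chaining} applies directly to $(s,s')$ pairs rather than to expectations.
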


\begin{theorem}
\label{thm:necc}
If a reward function $r'(s,a,s')$ is disentangled for all dynamics functions, then it must be state-only. i.e. If for all dynamics $T$,
\[Q^*_{r, T}(s,a) = Q^*_{r', T}(s,a) + f(s)\ \forall{s,a}\]
Then $r'$ is only a function of state.
\end{theorem}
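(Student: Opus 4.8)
The plan is to push the disentanglement hypothesis down from $Q$-functions to soft value functions, then through the soft Bellman equation, and finally to specialize the choice of dynamics. Throughout I use two standard facts from the maximum causal entropy framework: the soft optimal value $V^*_{r,T}(s)$ equals the log-sum-exp (softmax) of $Q^*_{r,T}(s,\cdot)$ over actions, and the soft Bellman relation $Q^*_{r,T}(s,a) = \mathbb{E}_{s'\sim T(\cdot\mid s,a)}\big[r(s,a,s') + \gamma V^*_{r,T}(s')\big]$ holds (and likewise for $r'$). I take the ground-truth reward to be state-only, $r=r(s)$, as in Theorem~\ref{thm:suff}, and I allow the shaping function $f$ appearing in the hypothesis to depend on $T$, since disentanglement only constrains it per dynamics.

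First I would apply the softmax over actions to both sides of the hypothesis $Q^*_{r,T}(s,a) = Q^*_{r',T}(s,a) + f(s)$; since $f(s)$ is independent of $a$ it factors through the log-sum-exp, giving $V^*_{r,T}(s) = V^*_{r',T}(s) + f(s)$ for every $s$ and every $T$. Next I would insert the soft Bellman equations for $r$ (state-only, so $Q^*_{r,T}(s,a) = r(s) + \gamma\,\mathbb{E}_{s'}[V^*_{r,T}(s')]$) and for $r'$ into the hypothesis, subtract, and substitute the value identity just derived. After rearranging, this yields, for all $s$, $a$, and $T$,
\[
\mathbb{E}_{s'\sim T(\cdot\mid s,a)}\big[\,r'(s,a,s') - \gamma f(s')\,\big] \;=\; r(s) - f(s),
\]
so in particular the left-hand side is independent of $a$.

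Now I would exploit the ``for all dynamics'' quantifier. Fixing $s$, two actions $a_1, a_2$, and a target $s'$, and choosing a deterministic $T$ that sends both $(s,a_1)$ and $(s,a_2)$ to $s'$, the displayed identity forces $r'(s,a_1,s') = r'(s,a_2,s')$; hence $r'$ does not depend on the action. Choosing instead a deterministic $T$ with $T(s,a) = s'$ yields $r'(s,s') = r(s) - f(s) + \gamma f(s')$, i.e.\ $r'$ coincides with the state-only reward $r$ up to a potential-shaping term $\gamma\Phi(s') - \Phi(s)$ with $\Phi = f$ --- which, by Theorem~\ref{thm:suff}, is itself disentangled, so this is the sharpest possible converse. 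Reconciling the relations obtained from different choices of $T$ into a single potential $f$ is exactly where the decomposability condition enters: it guarantees that a function of the form $f(s) + g(s')$ determines $f$ and $g$ up to an additive constant.

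The main difficulty I expect is bookkeeping rather than conceptual depth: tracking the dependence of $f$ on $T$ and invoking the decomposability condition at the right moment to collapse the per-dynamics potentials, and checking that the softmax/soft-value identity and the deterministic dynamics used in the specialization step are admissible in the setting considered (a finite action set, or the appropriate integrability in the continuous case, plus self-consistency of the constructed $T$). A secondary subtlety is that ``state-only'' in the conclusion must be read as ``state-only modulo the shaping equivalence of Eqn.~\ref{eqn:equivalence},'' which is unavoidable precisely because Theorem~\ref{thm:suff} shows $r(s) + \gamma\Phi(s') - \Phi(s)$ is disentangled.
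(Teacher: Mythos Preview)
Your approach differs substantially from the paper's. The paper argues the contrapositive by exhibiting a single concrete $3$-state deterministic MDP: it writes down an action-dependent $r'(s,a)$ that is policy-equivalent to $r$ under one transition model (obtained by applying a shaping potential and then freezing the result as a function of $(s,a)$), swaps two transition arrows, and checks that the optimal policies under $r$ and $r'$ now disagree. That argument is short and self-contained, but it only exhibits \emph{one} non-state-only reward that fails; it does not address the case you flag at the end, where $r'(s,a,s')=r(s)+\gamma\Phi(s')-\Phi(s)$ for a dynamics-independent potential $\Phi$ --- such an $r'$ is policy-invariant under every $T$ by the shaping theorem yet is not literally state-only. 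Your direct route through the soft Bellman identity and specialization of $T$ instead yields precisely this characterization --- $r'$ is action-free and equals $r(s)$ up to a fixed potential shaping --- so you in fact prove the sharp converse, which is slightly stronger and more accurate than what the paper's counterexample establishes. The trade-off is the bookkeeping you anticipate: the cleanest way to handle the $T$-dependence of $f$ is not to reconcile across many dynamics, but to fix a single deterministic $T$ in which every $s'$ is a one-step successor of every $s$ (assuming $|\mathcal{A}|\ge|\mathcal{S}|$), so that the relation $r'(s,s')-r(s)=\gamma f_T(s')-f_T(s)$ already holds for all pairs $s,s'$ with one $f_T$, and no cross-$T$ reconciliation is needed.
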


In the traditional IRL setup, where we learn the reward in a single MDP, our analysis motivates learning reward functions that are solely functions of state. If the ground truth reward is also only a function of state, this allows us to recover the true reward up to a constant.

\section{Learning Disentangled Rewards with AIRL}
In the method presented in Section~\ref{sec:gan_irl}, we cannot learn a state-only reward function, $r_\theta(s)$, meaning that we cannot guarantee that learned rewards will not be shaped. In order to decouple the reward function from the advantage, we propose to modify the discriminator of Sec.~\ref{sec:gan_irl} with the form:
\[
D_{\theta, \phi}(s,a,s') = \frac{\exp\{f_{\theta, \phi}(s,a,s')\}}{\exp\{f_{\theta, \phi}(s,a,s')\} + \pi(a|s)} ,
\]
where $f_{\theta, \phi}$ is restricted to a reward approximator $g_\theta$ and a shaping term $h_\phi$ as
\begin{equation}
\label{eqn:inverse_ql}
f_{\theta, \phi}(s,a,s') = g_\theta(s,a) + \gamma h_\phi(s') - h_\phi(s) .
\end{equation}
The additional shaping term helps mitigate the effects of unwanted shaping on our reward approximator $g_\theta$ (and as we will show, in some cases it can account for \textit{all} shaping effects). The entire training procedure is detailed in Algorithm~\ref{alg:inverse_rl}. Our algorithm resembles GAIL~\citep{Ho16b} and GAN-GCL~\citep{Finn16b}, where we alternate between training a discriminator to classify expert data from policy samples, and update the policy to confuse the discriminator.

\begin{algorithm*}[tb]
\caption{Adversarial inverse reinforcement learning}
\label{alg:inverse_rl}
\begin{algorithmic}[1]
    \STATE Obtain expert trajectories $\tau^E_i$
    \STATE Initialize policy $\pi$ and discriminator $D_{\theta, \phi}$.
    \FOR{step $t$ in \{1, \dots, N\}}
        \STATE Collect trajectories $\tau_i = (s_0, a_0, ..., s_T, a_T)$ by executing $\pi$.
        \STATE Train $D_{\theta, \phi}$ via binary logistic regression to classify expert data $\tau^E_i$ from samples $\tau_i$.
        \STATE Update reward $r_{\theta, \phi}(s,a,s') \leftarrow \log D_{\theta, \phi}(s,a,s') - \log(1-D_{\theta, \phi}(s,a,s'))$
        \STATE Update $\pi$ with respect to $r_{\theta, \phi}$ using any policy optimization method.
    \ENDFOR
\end{algorithmic}
\end{algorithm*}

The advantage of this approach is that we can now parametrize $g_\theta(s)$ as solely a function of the state, allowing us to extract rewards that are disentangled from the dynamics of the environment in which they were trained. In fact, under this restricted case, we can show the following under deterministic environments with a state-only ground truth reward (proof in Appendix~\ref{app:airl_proof}):
\[g^*(s) = r^*(s) + \text{const},\]
\[h^*(s) = V^*(s) + \text{const},\]
where $r^*$ is the \textit{true} reward function. Since $f^*$ must recover to the advantage as shown in Sec.~\ref{sec:gan_irl}, $h$ recovers the optimal value function $V^*$, which serves as the reward shaping term.

To be consistent with Sec.~\ref{sec:gan_irl}, an alternative way to interpret the form of Eqn.~\ref{eqn:inverse_ql} is to view $f_{\theta, \phi}$ as the advantage under deterministic dynamics
\[
f^*(s,a,s') = \underbrace{r^*(s) + \gamma V^*(s')}_{Q(s,a)} - \underbrace{V^*(s)}_{V(s)} = A^*(s,a)
\]
In stochastic environments, we can instead view $f(s,a,s')$ as a single-sample estimate of $A^*(s,a)$.

\section{Experiments}
In our experiments, we aim to answer two questions:
\begin{enumerate}
\item Can \algnameabbrev\ learn disentangled rewards that are robust to changes in environment dynamics?
\item Is \algnameabbrev\ efficient and scalable to high-dimensional continuous control tasks?
\end{enumerate}
To answer 1, we evaluate \algnameabbrev\ in transfer learning scenarios, where a reward is learned in a training environment, and optimized in a test environment with significantly different dynamics. We show that rewards learned with our algorithm under the constraint presented in Section~\ref{sec:disentangle} still produce optimal or near-optimal behavior, while na\"ive methods that do not consider reward shaping fail. We also show that in small MDPs, we can recover the exact ground truth reward function.

To answer 2, we compare \algnameabbrev\ as an imitation learning algorithm against GAIL~\citep{Ho16b} and the GAN-based GCL algorithm proposed by~\citet{Finn16b}, which we refer to as GAN-GCL, on standard benchmark tasks that do not evaluate transfer. Note that ~\citet{Finn16b} does not implement or evaluate GAN-GCL and, to our knowledge, we present the first empirical evaluation of this algorithm. We find that \algnameabbrev\ performs on par with GAIL in a traditional imitation learning setup while vastly outperforming it in transfer learning setups, and outperforms GAN-GCL in both settings. It is worth noting that, except for \citep{Finn16a}, our method is the only IRL algorithm that we are aware of that scales to high dimensional tasks with unknown dynamics, and although GAIL~\citep{Ho16b} resembles an IRL algorithm in structure, it does not recover disentangled reward functions, making it unable to re-optimize the learned reward under changes in the environment, as we illustrate below.

For our continuous control tasks, we use trust region policy optimization~\citep{Schulman15} as our policy optimization algorithm across all evaluated methods, and in the tabular MDP task, we use soft value iteration. We obtain expert demonstrations by training an expert policy on the ground truth reward, but hide the ground truth reward from the IRL algorithm. In this way, we simulate a scenario where we wish to use RL to solve a task but wish to refrain from manual reward engineering and instead seek to learn a reward function from demonstrations. Our code and additional supplementary material including videos will be available at \url{https://sites.google.com/view/adversarial-irl}, and hyper-parameter and architecture choices are detailed in Appendix~\ref{app:hyperparams}.

\subsection{Recovering true rewards in tabular MDPs}

We first consider MaxEnt IRL in a toy task with randomly generated MDPs. The MDPs have 16 states, 4 actions, randomly drawn transition matrices, and a reward function that always gives a reward of $1.0$ when taking an action from state 0. The initial state is always state 1.

The optimal reward, learned reward with a state-only reward function, and learned reward using a state-action reward function are shown in Fig.~\ref{fig:tabular_rewards}. We subtract a constant offset from all reward functions so that they share the same mean for visualization - this does not influence the optimal policy. AIRL with a state-only reward function is able to recover the ground truth reward, but AIRL with a state-action reward instead recovers a shaped advantage function.

We also show that in the transfer learning setup, under a new transition matrix $T'$, the optimal policy under the state-only reward achieves optimal performance (it is identical to the ground truth reward) whereas the state-action reward only improves marginally over uniform random policy. The learning curve for this experiment is shown in Fig~\ref{fig:tabular_transfer}.

\begin{figure*}[t]
\setlength{\unitlength}{0.99\columnwidth}
    \centering
    \begin{minipage}{.59\textwidth}
    \includegraphics[width=0.99\columnwidth]{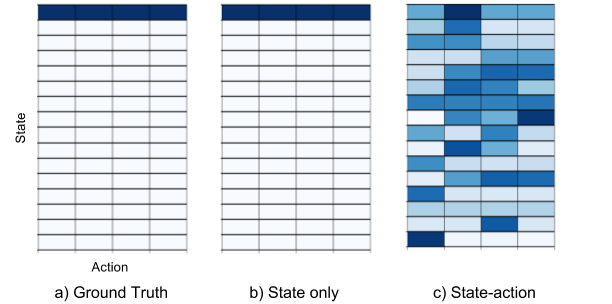}
         \vspace{-0.1in}

\caption{
\label{fig:tabular_rewards}
Ground truth (a) and learned rewards (b, c) on the random MDP task. Dark blue corresponds to a reward of 1, and white corresponds to 0. Note that AIRL with a state-only reward recovers the ground truth, whereas the state-action reward is shaped. 
}
    \end{minipage}%
    \hfill
    \begin{minipage}{0.39\textwidth}
        \centering
    \includegraphics[width=0.99\columnwidth]{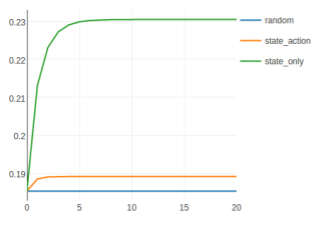}
     \vspace{-0.2in}
\caption{
\label{fig:tabular_transfer}
Learning curve for the transfer learning experiment on tabular MDPs. Value iteration steps are plotted on the x-axis, against returns for the policy on the y-axis.
}
\end{minipage}

\end{figure*}

\subsection{Disentangling Rewards in Continuous Control Tasks}

To evaluate whether our method can learn disentangled rewards in higher dimensional environments, we perform transfer learning experiments on continuous control tasks. In each task, a reward is learned via IRL on the training environment, and the reward is used to reoptimize a new policy on a test environment. We train two IRL algorithms, \algnameabbrev\ and GAN-GCL, with state-only and state-action rewards. We also include results for directly transferring the policy learned with GAIL, and an oracle result that involves optimizing the ground truth reward function with TRPO. Numerical results for these environment transfer experiments are given in Table~\ref{tbl:transfer_results}.

The first task involves a 2D point mass navigating to a goal position in a small maze when the position of the walls are changed between train and test time. At test time, the agent cannot simply mimic the actions learned during training, and instead must successfully infer that the goal in the maze is to reach the target. The task is shown in Fig.~\ref{fig:transfer_maze}. Only \algnameabbrev\ trained with state-only rewards is able to consistently navigate to the goal when the maze is modified. Direct policy transfer and state-action IRL methods learn rewards which encourage the agent to take the same path taken in the training environment, which is blocked in the test environment. We plot the learned reward in Fig.~\ref{fig:pointmass_transfer_rew}. 

In our second task, we modify the agent itself. We train a quadrupedal ``ant'' agent to run forwards, and at test time we disable and shrink two of the front legs of the ant such that it must significantly change its gait.
We find that \algnameabbrev\ is able to learn reward functions that encourage the ant to move forwards, acquiring a modified gait that involves orienting itself to face the forward direction and crawling with its two hind legs. Alternative methods, including transferring a policy learned by GAIL (which achieves near-optimal performance with the unmodified agent), fail to move forward at all. We show the qualitative difference in behavior in Fig.~\ref{fig:ant_transfer_filmstrip}.

We have demonstrated that \algnameabbrev\ can learn disentangled rewards that can accommodate significant domain shift even in high-dimensional environments where it is difficult to exactly extract the true reward. GAN-GCL can presumably learn disentangled rewards, but we find that the trajectory-centric formulation does not perform well even in learning rewards in the original task, let alone transferring to a new domain. GAIL learns successfully in the training domain, but does not acquire a representation that is suitable for transfer to test domains.

\begin{table}[H]
\caption{Results on transfer learning tasks. Mean scores (higher is better) are reported over 5 runs. We also include results for TRPO optimizing the ground truth reward, and the performance of a policy learned via GAIL on the training environment.}
\label{tbl:transfer_results}
\begin{center}
\begin{tabular}{c|c|cc}
\hline
& State-Only? & Point Mass-Maze & Ant-Disabled \\
\hline
GAN-GCL & No &  -40.2 &  -44.8\\
\hdashline
GAN-GCL & Yes & -41.8 & -43.4\\
\hdashline
\algnameabbrev\ \textbf{(ours)} & No &  -31.2 & -41.4\\
\hdashline
\algnameabbrev\ \textbf{(ours)} & Yes &  \textbf{-8.82} & \textbf{130.3}\\
\hline
GAIL, policy transfer & N/A &  -29.9 &  -58.8\\
\hdashline
TRPO, ground truth & N/A &  -8.45 &  315.5\\
\hline
\end{tabular}
\end{center}
\end{table}

\begin{figure*}[t]
\setlength{\unitlength}{0.99\columnwidth}
    \centering
    \begin{minipage}{.49\textwidth}
    \includegraphics[width=0.89\columnwidth]{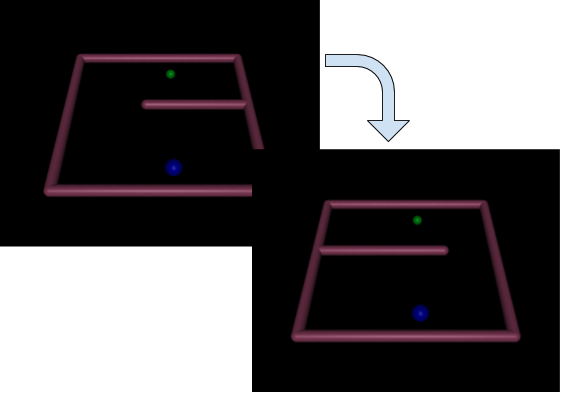}
         \vspace{-0.1in}

\caption{
\label{fig:transfer_maze}
Illustration of the shifting maze task, where the agent (blue) must reach the goal (green). During training the agent must go around the wall on the left side, but during test time it must go around on the right.
}
\end{minipage}%
    \hfill
    \begin{minipage}{.49\textwidth}
    \centering
    \includegraphics[width=0.79\columnwidth]{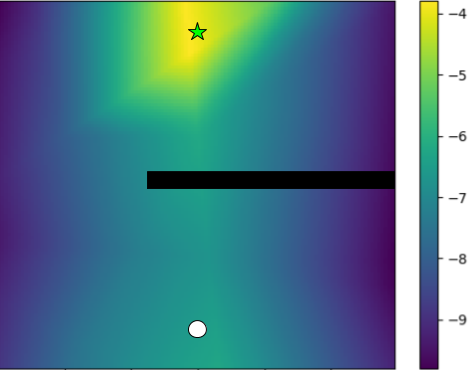}
         \vspace{-0.1in}

\caption{
\label{fig:pointmass_transfer_rew}
Reward learned on the point mass shifting maze task. The goal is located at the green star and the agent starts at the white circle. Note that there is little reward shaping, which enables the reward to transfer well.
}
    \end{minipage}
\end{figure*}

\begin{figure}
\centering
\includegraphics[width=0.16\textwidth]{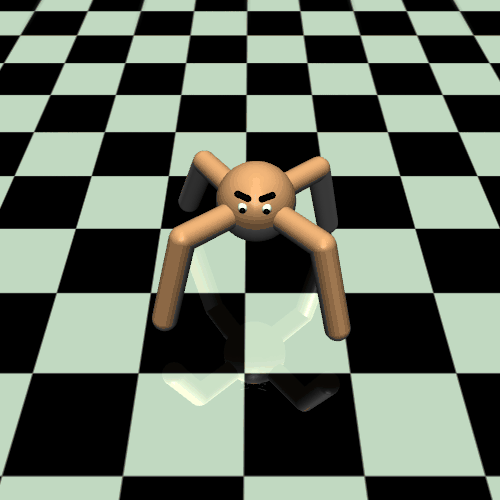}
\includegraphics[width=0.16\textwidth]{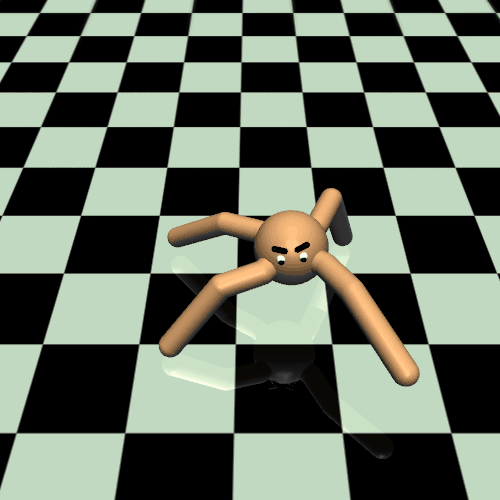}
\includegraphics[width=0.16\textwidth]{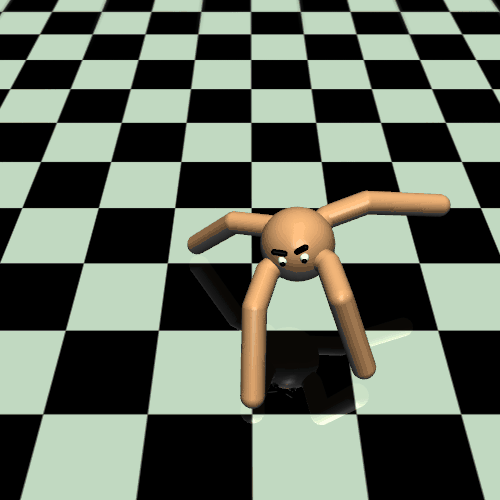}
\includegraphics[width=0.16\textwidth]{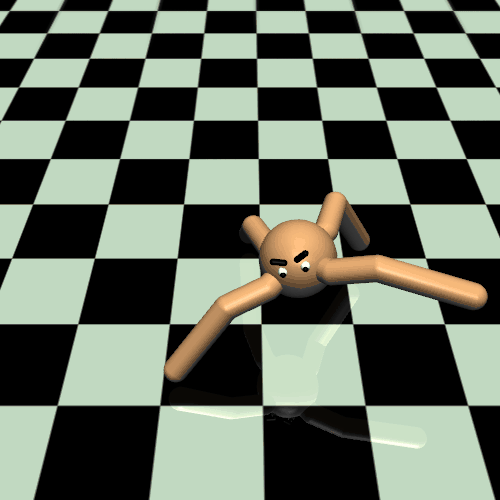}
\includegraphics[width=0.16\textwidth]{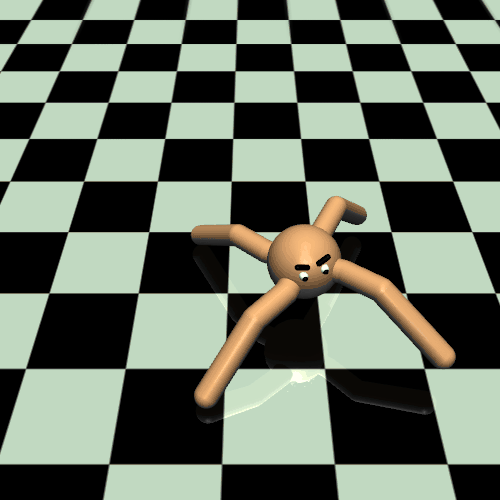}

\includegraphics[width=0.16\textwidth]{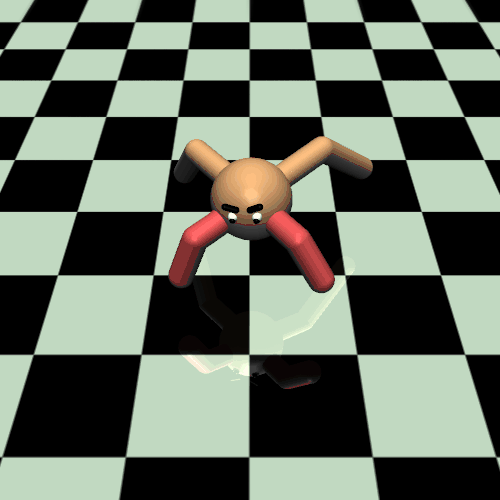}
\includegraphics[width=0.16\textwidth]{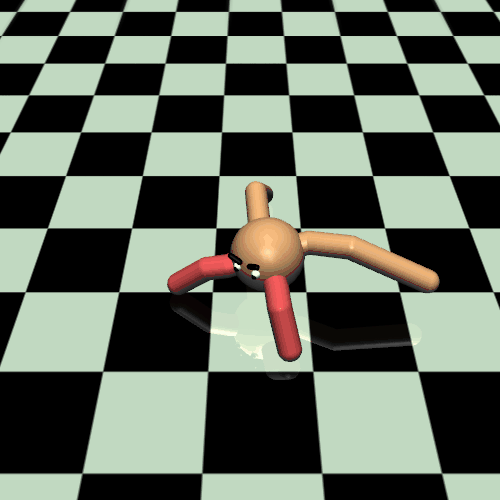}
\includegraphics[width=0.16\textwidth]{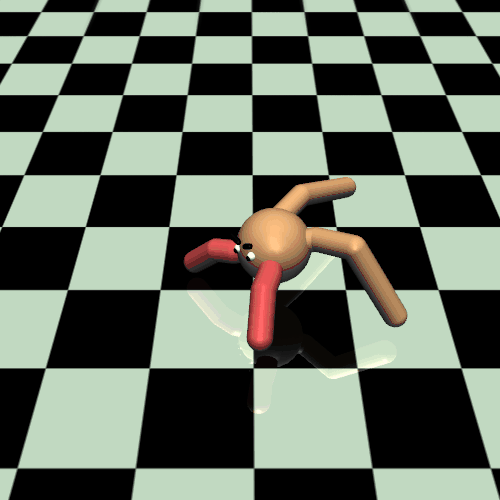}
\includegraphics[width=0.16\textwidth]{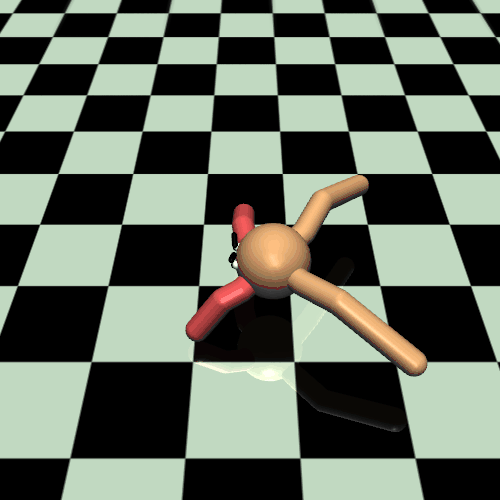}
\includegraphics[width=0.16\textwidth]{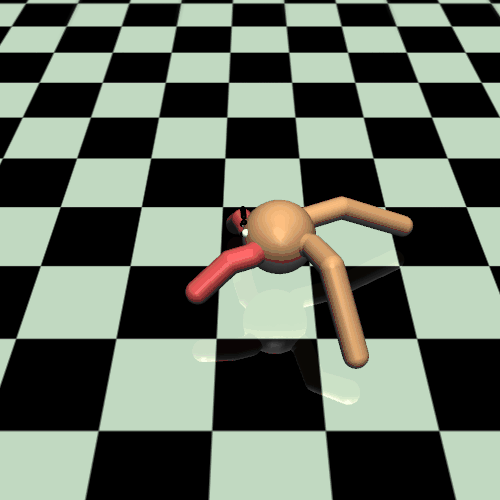}

\vspace{-0.1in}
\caption{\label{fig:ant_transfer_filmstrip}
\textit{Top row}: An ant running forwards (right in the picture) in the training environment. \textit{Bottom row}: Behavior acquired by optimizing a state-only reward learned with AIRL on the disabled ant environment. Note that the ant must orient itself before crawling forward, which is a qualitatively different behavior from the optimal policy in the original environment, which runs sideways.}
\end{figure}

\subsection{Benchmark Tasks for Imitation Learning}

Finally, we evaluate \algnameabbrev\ as an imitation learning algorithm against the GAN-GCL and the state-of-the-art GAIL on several benchmark tasks. Each algorithm is presented with 50 expert demonstrations, collected from a policy trained with TRPO on the ground truth reward function. For AIRL, we use an unrestricted state-action reward function as we are not concerned with reward transfer. Numerical results are presented in Table~\ref{tbl:imitation_results}.
These experiments do not test transfer, and in a sense can be regarded as ``testing on the training set,'' but they match the settings reported in prior work~\citep{Ho16b}.

We find that the performance difference between \algnameabbrev\ and GAIL is negligible, even though  \algnameabbrev\ is a true IRL algorithm that recovers reward functions, while GAIL does not. Both methods achieve close to the best possible result on each task, and there is little room for improvement. This result goes against the belief that IRL algorithms are indirect, and less efficient that direct imitation learning algorithms~\citep{Ho16b}. The GAN-GCL method is ineffective on all but the simplest Pendulum task when trained with the same number of samples as \algnameabbrev\ and GAIL. We find that a discriminator trained over trajectories easily overfits and provides poor learning signal for the policy.

Our results illustrate that \algnameabbrev\ achieves the same performance as GAIL on benchmark imitation tasks that do not require any generalization. On tasks that require transfer and generalization, illustrated in the previous section, \algnameabbrev\ outperforms GAIL by a wide margin, since our method is able to recover disentangled rewards that transfer effectively in the presence of domain shift.


\begin{table}[H]
\caption{Results on imitation learning benchmark tasks. Mean scores (higher is better)  are reported across 5 runs.}
\label{tbl:imitation_results}
\begin{center}
\begin{tabular}{c|ccccc}
\hline
& Pendulum & Ant & Swimmer & Half-Cheetah \\
\hline
GAN-GCL & -261.5 &  460.6 & -10.6 &  -670.7 \\
\hdashline
GAIL &  -226.0 &  \textbf{1358.7} & \textbf{140.2} &  1642.8\\
\hdashline
\algnameabbrev\ \textbf{(ours)} & \textbf{-204.7} &  1238.6 & 139.1 & \textbf{1839.8}\\
\hdashline
\algnameabbrev\ State Only \textbf{(ours)} & -221.5 &  1089.3 & 136.4 & 891.9\\
\hline
Expert (TRPO) & -179.6 &  1537.9 & 141.1 &  1811.2 \\
\hdashline
Random & -654.5 &  -108.1 & -11.5 &  -988.4 \\
\hline
\end{tabular}
\end{center}
\end{table}

\section{Conclusion}
We presented \algnameabbrev, a practical and scalable IRL algorithm that can learn disentangled rewards and greatly outperforms both prior imitation learning and IRL algorithms. We show that rewards learned with \algnameabbrev\ transfer effectively under variation in the underlying domain, in contrast to unmodified IRL methods which tend to recover brittle rewards that do not generalize well and GAIL, which does not recover reward functions at all. In small MDPs where the optimal policy and reward are unambiguous, we also show that we can exactly recover the ground-truth rewards up to a constant.

\section*{Acknowledgements}
This research was supported by the National Science Foundation through IIS-1651843, IIS-1614653, and IIS-1637443. We would like to thank Roberto Calandra for helpful feedback on the paper.

\bibliography{iclr2018_conference}
\bibliographystyle{iclr2018_conference}

\clearpage
\appendix
\section*{Appendices}

\section{Justification of AIRL}
\label{app:gan_irl_proof}
In this section, we show that the objective of AIRL matches that of solving the maximum causal entropy IRL problem. We use a similar method as ~\citet{Finn16b}, which shows the justification of GAN-GCL for the trajectory-centric formulation. For simplicity we derive everything in the undiscounted case.

\subsection{Setup}

As mentioned in Section~\ref{sec:background}, the goal of IRL can be seen as training a generative model over trajectories as:
\[
\max_\theta J(\theta) = \max_\theta E_{\tau \sim \mathcal{D}}[\log p_\theta(\tau)]
\]
Where the distribution $p_\theta(\tau)$ is parametrized as $p_\theta(\tau) \propto p(s_0)\prod_{t=0}^{T-1} p(s_{t+1}|s_t, a_t) e^{r_\theta(s_t, a_t)}$. We can compute the gradient with respect to $\theta$ as follows:
\[
\partheta J(\theta) = E_{\mathcal{D}}[\partheta \log p_\theta(\tau)] =
= E_{\mathcal{D}}[\sum_{t=0}^T \partheta r_\theta(s_t, a_t)] - \partheta \log Z_\theta
\]\[
= E_{\mathcal{D}}[\sum_{t=0}^T \partheta r_\theta(s_t, a_t)] - E_{p_\theta}[\sum_{t=0}^T  \partheta r_\theta(s_t, a_t)]
\]

Let $p_{\theta,t}(s_t, a_t) = \int_{s_{t'\neq t}, a_{t' \neq t}}p_\theta(\tau)$ denote the state-action marginal at time $t$. Rewriting the above equation, we have:
\[
\partheta J(\theta) = \sum_{t=0}^T E_{\mathcal{D}}[\partheta r_\theta(s_t, a_t)] - E_{p_{\theta, t}}[\partheta r_\theta(s_t, a_t)]
\]

As it is difficult to draw samples from $p_\theta$, we instead train a separate importance sampling distribution $\mu(\tau)$. For the choice of this distribution, we follow~\citet{Finn16b} and use a mixture policy $\mu(a|s) = \frac{1}{2}\pi(a|s) + \frac{1}{2}\hat{p}(a|s)$, where $\hat{p}(a|s)$ is a rough density estimate trained on the demonstrations. This is justified as reducing the variance of the importance sampling estimate when the policy $\pi(a|s)$ has poor coverage over the demonstrations in the early stages of training. Thus, our new gradient is:
\begin{equation}
\label{eqn:gcl_cost_obj_mu}
\partheta J(\theta) = \sum_{t=0}^T E_{\mathcal{D}}[\partheta r_\theta(s_t, a_t)] - E_{\mu_t}[\frac{p_{\theta, t}(s_t,a_t)}{\mu_t(s_t,a_t)} \partheta r_\theta(s_t, a_t)]
\end{equation}

We additionally wish to adapt the importance sampler $\pi$ to reduce variance, by minimizing $D_{KL}(\pi(\tau) || p_\theta(\tau))$. The policy trajectory distribution factorizes as $\pi(\tau) = p(s_0)\prod_{t=0}^{T-1} p(s_{t+1}|s_t, a_t) \pi(a_t|s_t)$. The dynamics and initial state terms inside $\pi(\tau)$ and $p_\theta(\tau)$ cancel, leaving the entropy-regularized policy objective:
\begin{equation}
\label{eqn:gcl_pol_obj}
\max_\pi E_{\pi}[\sum_{t=0}^T r_\theta(s_t,a_t) - \log \pi(a_t|s_t))]
\end{equation}


In AIRL, we replace the cost learning objective with training a discriminator of the following form:
\begin{equation}
\label{eqn:discrim_advantage}
D_\theta(s,a) = \frac{\exp\{f_\theta(s,a)\}}{\exp\{f_\theta(s,a)\}+\pi(a|s)}
\end{equation}

The objective of the discriminator is to minimize cross-entropy loss between expert demonstrations and generated samples:
\[
\mathcal{L}(\theta) = \sum_{t=0}^T -E_{\mathcal{D}}\left[\log D_\theta(s_t,a_t)\right] - E_{\pi_t}\left[ \log(1-D_\theta(s_t,a_t))\right]
\]

We replace the policy optimization objective with the following reward:
\[
\hat{r}(s,a) = \log(D_\theta(s,a)) - \log(1-D_\theta(s,a))
\]

\subsection{Discriminator Objective}
First, we show that training the gradient of the discriminator objective is the same as Eqn.~\ref{eqn:gcl_cost_obj_mu}. We write the negative loss to turn the minimization problem into maximization, and use $\mu$ to denote a mixture between the dataset and policy samples.
\begin{align*}
-\mathcal{L}(\theta) &= \sum_{t=0}^T E_{\mathcal{D}}\left[\log D_\theta(s_t,a_t)\right] + E_{\pi_t}\left[\log(1-D_\theta(s_t,a_t))\right] \\
&= 
\sum_{t=0}^T E_{\mathcal{D}}\left[\log \frac{\exp\{ f_\theta(s_t,a_t)\}}{\exp\{f_\theta(s_t,a_t)\}+\pi(a_t|s_t)}\right] 
+ E_{\pi_t}\left[\log \frac{\pi(a_t|s_t)}{\exp\{f_\theta(s_t,a_t)\}+\pi(a_t|s_t)}\right] \\ 
&= \sum_{t=0}^T E_{\mathcal{D}}\left[f_\theta(s_t,a_t)\right] + E_{\pi_t}\left[\log \pi(a_t|s_t)\right] - 2E_{\bar{\mu}_t}\left[\log{(\exp\{f_\theta(s_t,a_t)\}+\pi(a_t|s_t))}\right] \\
\end{align*}
Taking the derivative w.r.t. $\theta$,
\[
\partheta\mathcal{L}(\theta)=
\sum_{t=0}^T E_{\mathcal{D}}\left[\partheta f_\theta(s_t,a_t)\right] 
-E_{\mu_t}\left[\frac{\exp\{f_\theta(s_t,a_t)\}}{\frac{1}{2}\exp\{f_\theta(s_t,a_t)\}+\frac{1}{2}\pi(a_t|s_t)}\partheta f_\theta(s_t,a_t)\right]
\]
Multiplying the top and bottom of the fraction in the second expectation by the state marginal $\pi(s_t) = \int_{a}\pi_t(s_t,a_t)$, and grouping terms we get:
\[
\partheta\mathcal{L}(\theta)=
\sum_{t=0}^T E_{\mathcal{D}}\left[\partheta f_\theta(s_t,a_t)\right] 
-E_{\mu}\left[\frac{\hat{p}_{\theta, t}(s_t,a_t)}{\hat{\mu}_t(s_t,a_t)}\partheta f_\theta(s_t,a_t)\right]
\]
Where we have written $\hat{p}_{\theta,t}(s_t,a_t) = \exp\{f_\theta(s_t,a_t)\}\pi_t(s_t)$, and $\hat{\mu}$ to denote a mixture between $\hat{p}_\theta(s,a)$ and policy samples.

This expression matches Eqn.~\ref{eqn:gcl_cost_obj_mu}, with $f_\theta(s,a)$ serving as the reward function, when $\pi$ maximizes the policy objective so that $\hat{p}_\theta(s,a) = p_\theta(s,a)$.

\subsection{Policy Objective}
Next, we show that the policy objective matches that of the sampler of Eqn.~\ref{eqn:gcl_pol_obj}. The objective of the policy is to maximize with respect to the reward $\hat{r}_t(s,a)$. First, note that:
\begin{align*}
\hat{r}_t(s,a) &= \log(D_\theta(s,a)) - \log(1-D_\theta(s,a)) \\
&= \log\frac{e^{f_\theta(s,a)}}{e^{f_\theta(s,a)}+\pi(a|s)}
-\log\frac{\pi(a|s)}{e^{f_\theta(s,a)}+ \pi(a|s)} \\ 
&= f_\theta(s,a) - \log \pi(a|s) \\ 
\end{align*}
Thus, when $\hat{r}(s,a)$ is summed over entire trajectories, we obtain the entropy-regularized policy objective
\begin{align*}
E_{\pi}\left[\sum_{t=0}^T \hat{r}_t(s_t, a_t)\right] = 
E_{\pi}\left[\sum_{t=0}^T f_\theta(s_t,a_t) - \log \pi(a_t|s_t)\right]
\end{align*}
Where $f_\theta$ serves as the reward function.

\subsection{$f_\theta(s,a)$ recovers the advantage}
\label{app:airl_advantage}
The global minimum of the discriminator objective is achieved when $\pi = \pi_E$, where $\pi$ denotes the learned policy (the "generator" of a GAN) and $\pi_E$ denotes the policy under which demonstrations were collected ~\citep{gan-goodfellow}. At this point, the output of the discriminator is $\frac{1}{2}$ for all values of $s,a$, meaning we have  $\exp\{f_\theta(s,a)\}= \pi_E(a|s)$, or $f^*(s,a) = \log \pi_E(a|s) = A^*(s,a)$.

\section{State-Only Inverse Reinforcement Learning}
\label{app:state_only}
In this section we include proofs for Theorems \ref{thm:suff} and \ref{thm:necc}, and the condition on the dynamics necessary for them to hold.

\begin{definition}[Decomposability Condition]
Two states $s_1, s_2$ are defined as "1-step linked" under a dynamics or transition distribution $T(s'|a,s)$ if there exists a state $s$ that can reach $s_1$ and $s_2$ with positive probability in one time step. Also, we define that this relationship can transfer through transitivity: if $s_1$ and $s_2$ are linked, and $s_2$ and $s_3$ are linked, then we also consider $s_1$ and $s_3$ to be linked.

A transition distribution $T$ satisfies the decomposability condition if all states in the MDP are linked with all other states.
\end{definition}

The key reason for needing this condition is that it allows us to decompose the functions state dependent $f(s)$ and next state dependent $g(s')$ from their sum $f(s) + g(s')$, as stated below:

\begin{lemma}
\label{lem:chaining}
Suppose the dynamics for an MDP satisfy the decomposability condition. Then, for functions $a(s),b(s),c(s),d(s)$, if for all $s,s'$:
\[
a(s) + b(s') = c(s) + d(s')
\]
Then for for all $s$,
\[a(s) = c(s) + \const \]
\[b(s) = d(s) + \const \]
\end{lemma}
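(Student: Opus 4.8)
The plan is to subtract the two sides of the identity and then exploit that the linking relation connects all states. Set $e(s) := a(s) - c(s)$ and $k(s) := d(s) - b(s)$. The hypothesis $a(s) + b(s') = c(s) + d(s')$ --- understood to hold over transition-feasible pairs $(s,s')$, i.e.\ those with $T(s'\mid a,s) > 0$ for some action $a$, which is the only reading under which the decomposability condition does any work --- then becomes the much simpler statement
\[
e(s) = k(s') \qquad \text{whenever } s' \text{ can follow } s.
\]
So it suffices to show $e$ and $k$ are each globally constant, with the same value.

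First I would show $k$ is constant. If $s_1$ and $s_2$ are $1$-step linked, pick a state $u$ that reaches both with positive probability in one step; then $(u,s_1)$ and $(u,s_2)$ are both feasible, so $k(s_1) = e(u) = k(s_2)$. For arbitrary states $s_1, s_2$, the decomposability condition supplies a finite chain $s_1 = t_0, t_1, \dots, t_n = s_2$ with each consecutive pair $1$-step linked, and applying the previous observation along the chain gives $k(s_1) = k(t_1) = \dots = k(t_{n-1}) = k(s_2)$. Hence $k \equiv \kappa$ for some constant $\kappa$. Then, since each $T(\cdot\mid a,s)$ is a probability distribution on $\mathcal{S}$, every state $s$ has at least one feasible successor $s'$, so $e(s) = k(s') = \kappa$ for all $s$. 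Unwinding the definitions, $a(s) = c(s) + \kappa$ and $b(s) = d(s) - \kappa$ for all $s$ (in particular each agrees with its counterpart up to an additive constant, as claimed; as a bonus the two constants are negatives of one another).

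The only real obstacle is the bookkeeping around the linking relation: one must carefully convert "$1$-step linked, closed under transitivity, and covering all pairs of states" into an explicit finite path along which the $1$-step argument forces $k$ to be constant, and note the mild nondegeneracy that every state admits a successor so that $e$ picks up the same constant. Everything past that is a one-line substitution, so I would not expect any further friction.
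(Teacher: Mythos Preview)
Your proposal is correct and follows essentially the same route as the paper: rearrange to $a(s)-c(s) = d(s')-b(s')$, use the linking relation under the decomposability condition to force the successor-side function to be globally constant, then read off the conclusion for $a-c$. You are in fact more explicit than the paper about interpreting the hypothesis over transition-feasible pairs (without which the decomposability condition does no work) and about the chaining argument, but the underlying strategy is identical.
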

\begin{proof}
Rearranging, we have:
\[a(s)-c(s) = b(s')-d(s')\]

Let us rewrite $f(s)=a(s)-c(s)$. This means we have $f(s) = b(s')-d(s')$ for some function only dependent on $s$. In order for this to be representable, the term $b(s')-d(s')$ must be equal for all successor states $s'$ from $s$. Under the decomposability condition, all successor states must therefore be equal in this manner through transitivity, meaning we have $b(s') - d(s')$ must be constant with respect to $s$. Therefore, $a(s) = c(s) + \const$. We can then substitute this expression back in to the original equation to derive $b(s) = d(s) + \const$.
\end{proof}

We consider the case when the ground truth reward is state-only. We now show that if the learned reward is also state-only, then we guarantee learning disentangled rewards, and vice-versa (sufficiency and necessity).
\begin{reptheorem}{thm:suff}
Let $r(s)$ be a ground-truth reward, and $T$ be a dynamics model satisfying the decomposability condition. Suppose IRL recovers a state-only reward $r'(s)$ such that it produces an optimal policy in $T$:
\[Q^*_{r', T}(s,a) = Q^*_{r, T}(s,a) - f(s)\]
Then, $r'(s)$ is disentangled with respect to all dynamics.
\end{reptheorem}
\begin{proof}
We show that $r'(s)$ must equal the ground-truth reward up to constants (modifying rewards by constants does not change the optimal policy). 

Let $r'(s) = r(s) + \phi(s)$ for some arbitrary function of state $\phi(s)$. We have:

\[Q^*_r(s,a) = r(s) + \gamma E_{s'}[\lse{a'} Q^*_r(s',a')] \]
\[Q^*_r(s,a) -f(s) = r(s) - f(s) + \gamma E_{s'}[\lse{a'} Q^*_r(s',a')] \]
\[Q^*_r(s,a) -f(s) = r(s) +\gamma E_{s'}[f(s')] - f(s) + \gamma E_{s'}[\lse{a'} Q^*_r(s',a') - f(s')] \]
\[Q^*_{r'}(s,a) = r(s) +\gamma E_{s'}[f(s')] - f(s) + \gamma E_{s'}[\lse{a'} Q^*_{r'}(s',a')] \]
From here, we see that:
\[r'(s) = r(s) +\gamma E_{s'}[f(s')] - f(s)\]
Meaning we must have for all $s,a$:
\[\phi(s) = \gamma E_{s'}[f(s')] - f(s)\]

This places the requirement that all successor states from $s$ must have the same potential $f(s)$. Under the decomposability condition, every state in the MDP can be linked with such an equality statement, meaning that $f(s)$ is constant. Thus, $r'(s) = r(s) + \const$.
\end{proof}

\begin{reptheorem}{thm:necc}
If a reward function $r'(s,a,s')$ is disentangled for all dynamics functions, then it must be state-only. i.e. If for all dynamics $T$,
\[Q^*_{r, T}(s,a) = Q^*_{r', T}(s,a) + f(s)\ \forall{s,a}\]
Then $r'$ is only a function of state.
\end{reptheorem}
\begin{proof}
We show the converse, namely that if $r'(s,a,s')$ can depend on $a$ or $s'$, then there exists a dynamics model $T$ such that the optimal policy is changed, i.e. $Q^*_{r, T}(s,a) \neq Q^*_{r', T}(s,a) + f(s)\ \forall{s,a}$.

Consider the following 3-state MDP with deterministic dynamics and starting state $S$:

\begin{figure}[h]
\centering
\begin{tikzpicture}[->,auto,node distance=2.8cm]
\node[state](S){S};
\node[state](A)[left of=S]{A};
\node[state](B)[right of=S]{B};
\path (S) edge [bend left] node {a, 0} (A)
          edge [bend left] node {b, 0} (B)
      (A) edge [bend left] node {s, +1} (S)
      (B) edge [bend left] node {s, -1} (S)
          ;
\end{tikzpicture}
\end{figure}

We denote the action with a small letter, i.e. taking the action $a$ from $S$ brings the agent to state $A$, receiving a reward of $0$. For simplicity, assume the discount factor $\gamma=1$. The optimal policy here takes the $a$ action, returns to $s$, and repeat for infinite positive reward. An action-dependent reward which induces the same optimal policy would be to move the reward from the action returning to $s$ to the action going to $a$ or $s$:
\begin{center}
$r'(s,a) = $
\begin{tabular}{ |c|c|c| } 
 \hline
 State & Action & Reward \\ 
 \hline 
 S & a & +1 \\ 
 S & b & -1 \\ 
 A & s & 0 \\ 
 B & s & 0 \\ 
 \hline
\end{tabular}
\end{center}

This corresponds to the shaping potential $\phi(S) = 0, \phi(A)=1, \phi(B)=-1$.

Now suppose we modify the dynamics such that action $a$ leads to $B$ and action $b$ leads to $A$:

\begin{figure}[h]
\centering
\begin{tikzpicture}[->,auto,node distance=2.8cm]
\node[state](S){S};
\node[state](A)[left of=S]{A};
\node[state](B)[right of=S]{B};
\path (S) edge [bend left] node {b, 0} (A)
          edge [bend left] node {a, 0} (B)
      (A) edge [bend left] node {s, +1} (S)
      (B) edge [bend left] node {s, -1} (S)
          ;
\end{tikzpicture}
\end{figure}

Optimizing $r'$ on this new MDP results in a different policy than optimizing $r$, as the agent visits B, resulting in infinite negative reward.
\end{proof}

\section{AIRL recovers rewards up to constants}
\label{app:airl_proof}

In this section, we prove that AIRL can recover the ground truth reward up to constants if the ground truth is only a function of state $r(s)$. For simplicity, we consider deterministic environments, so that $s'$ is uniquely defined by $s,a$, and we restrict AIRL's reward estimator $g$ to only be a function of state.

\begin{theorem}
Suppose we use AIRL with a discriminator of the form
\[f(s,a,s') = g_\theta(s) + \gamma h_\phi(s') - h_\phi(s)\]
We also assume we have deterministic dynamics, in addition to the decomposability condition on the dynamics from Thm~\ref{thm:suff}.

Then if AIRL recovers the optimal $f^*(s,a,s')$, we have
\[g_\theta^*(s) = r(s) + \const \]
\[h_\phi^*(s) = V^*(s) + \const \]
\end{theorem}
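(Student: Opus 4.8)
The plan is to combine the characterization of the optimal discriminator from Appendix~\ref{app:gan_irl_proof} with the additive decomposition guaranteed by Lemma~\ref{lem:chaining}. First I would recall the argument of Appendix~\ref{app:airl_advantage}: at the global optimum of the discriminator objective the generator matches the expert, $\pi = \pi_E$, the discriminator equals $\tfrac12$ everywhere, and hence $\exp\{f^*(s,a,s')\} = \pi_E(a|s)$. Since the demonstrations come from the maximum-causal-entropy optimal policy of the ground-truth reward $r$ under the training dynamics $T$, we have $\log\pi_E(a|s) = Q^*_{\mathrm{soft}}(s,a) - V^*(s) = A^*(s,a)$, where $V^* = V^*_{r,T}$ is the soft value function; thus $f^*(s,a,s') = A^*(s,a)$. (Under deterministic dynamics $s'$ is determined by $(s,a)$, so treating $f^*$ as a function of $(s,a,s')$ or of $(s,a)$ is immaterial.) Before using this I would check that the restricted family of Eqn.~\ref{eqn:inverse_ql} is actually expressive enough to attain this optimum: under deterministic dynamics and a state-only reward, $A^*(s,a) = r(s) + \gamma V^*(s') - V^*(s)$, which already has the form $g(s) + \gamma h(s') - h(s)$ with $g = r$ and $h = V^*$, so the constraint on the discriminator does not obstruct reaching the global minimum.

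Next I would equate the two expressions for $f^*$: for every transition $(s, s')$ with $s' = T(s,a)$,
\[
g_\theta^*(s) + \gamma h_\phi^*(s') - h_\phi^*(s) = r(s) + \gamma V^*(s') - V^*(s) .
\]
Rearranging so that current-state and next-state terms are grouped on the two sides,
\[
\big(g_\theta^*(s) - h_\phi^*(s)\big) + \gamma h_\phi^*(s') = \big(r(s) - V^*(s)\big) + \gamma V^*(s') .
\]
This is precisely the hypothesis of Lemma~\ref{lem:chaining} with $a(s) = g_\theta^*(s) - h_\phi^*(s)$, $b(s') = \gamma h_\phi^*(s')$, $c(s) = r(s) - V^*(s)$, $d(s') = \gamma V^*(s')$; applying the lemma (this is where the decomposability condition on $T$ is used) gives $\gamma h_\phi^*(s) = \gamma V^*(s) + \text{const}$ and $g_\theta^*(s) - h_\phi^*(s) = r(s) - V^*(s) + \text{const}$. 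Dividing the first by $\gamma \neq 0$ yields $h_\phi^*(s) = V^*(s) + \text{const}$, and substituting this into the second yields $g_\theta^*(s) = r(s) + \text{const}$, which is the claim.

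The step I expect to be the main obstacle is the first one --- rigorously justifying that the constrained discriminator still recovers the advantage. Two things have to be argued: that the special parametrization can represent $A^*$, which is exactly where determinism enters (in a stochastic MDP $A^*(s,a) = r(s) + \gamma\,\mathbb{E}_{s'}[V^*(s')] - V^*(s)$ is not of the single-sample form $g(s) + \gamma h(s') - h(s)$, so the clean identity breaks and one can only view $f$ as a single-sample estimate of $A^*$); and that, given this expressiveness, the Appendix~\ref{app:gan_irl_proof} analysis carries over unchanged, with the policy converging to $\pi_E$. Everything after that is bookkeeping: the sole mathematical content is Lemma~\ref{lem:chaining}, and the discount factors are harmless because $\gamma \in (0,1)$ is nonzero. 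I would also note that the split of $f^*$ into a reward term and a shaping term is a priori non-unique, and it is the decomposability condition, through the lemma, that pins both down up to additive constants.
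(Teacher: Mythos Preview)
Your proposal is correct and follows essentially the same route as the paper: invoke Appendix~\ref{app:airl_advantage} to get $f^*(s,a,s') = A^*(s,a) = r(s) + \gamma V^*(s') - V^*(s)$ under deterministic dynamics, equate with $g^*(s) + \gamma h^*(s') - h^*(s)$, and apply Lemma~\ref{lem:chaining} with exactly the same identifications $a = g^* - h^*$, $b = \gamma h^*$, $c = r - V^*$, $d = \gamma V^*$. Your added expressiveness check (that the restricted $f$ can realize $A^*$) is a useful elaboration the paper leaves implicit, but otherwise the arguments coincide.
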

\begin{proof}

From Appendix \ref{app:airl_advantage}, we have $f^*(s,a,s') = A^*(s,a)$, so $f^*(s,a,s') = Q^*(s,a)-V^*(s) = r(s) + \gamma V^*(s') - V^*(s)$.

Substituting the form of $f$, we have for all $s, s'$:
\[g^*(s) + \gamma h^*(s') - h^*(s) = r(s) + \gamma V^*(s') - V^*(s)\]

Applying Lemma~\ref{lem:chaining} with $a(s)=g^*(s)-h^*(s)$, $b(s') = \gamma h^*(s')$, $c(s) = r(s) - V^*(s)$, and $d(s') = \gamma V^*(s')$ we have the result.
\end{proof}

\section{Experiment Details}
\label{app:hyperparams}
In this section we detail hyperparameters and training procedures used for our experiments. These hyperparameters were selected via a grid search.

\subsection{Network Architectures}
For the tabular MDP environment, we also use a tabular representation for function approximators.

For continuous control experiments, we use a two-layer ReLU network with 32 units for the discriminator of GAIL and GAN-GCL. For AIRL, we use a linear function approximator for the reward term $g$ and a 2-layer ReLU network for the shaping term $h$. For the policy, we use a two-layer (32 units) ReLU gaussian policy.

\subsection{Other Hyperparameters}
\textit{Entropy regularization}: We use an entropy regularizer weight of $0.1$ for Ant, Swimmer, and HalfCheetah across all methods. We use an entropy regularizer weight of $1.0$ on the point mass environment.

\textit{TRPO Batch Size}: For Ant, Swimmer and HalfCheetah environments, we use a batch size of 10000 steps per TRPO update. For pendulum, we use a batch size of 2000.

\subsection{Other Training Details}
IRL methods commonly learn rewards which explain behavior locally for the current policy, because the reward can "forget" the signal that it gave to an earlier policy. This makes rewards obtained at the end of training difficult to optimize from scratch, as they overfit to samples from the current iteration. To somewhat migitate this effect, we mix policy samples from the previous 20 iterations of training as negatives when training the discriminator. We use this strategy for both AIRL and GAN-GCL.

\end{document}